\documentclass[letterpaper, 10pt, conference]{ieeeconf}  

\IEEEoverridecommandlockouts                              

\usepackage{amsmath}
\usepackage{amssymb}

\usepackage{changes}
\usepackage{amsthm}
\usepackage{graphicx}
\usepackage{booktabs}
\usepackage{subcaption}
\usepackage[percent]{overpic}
\usepackage{mathtools}
\usepackage{array}
\usepackage{multicol}
\usepackage{siunitx}
\usepackage{lipsum}
\usepackage{cuted}
\usepackage{stfloats} 
\usepackage[colorlinks=true, allcolors=blue]{hyperref}

\newtheorem{proposition}{Proposition}

\newtheorem{lemma}{Lemma}

\usepackage{xcolor}
\usepackage{todonotes}

\title{\LARGE \bf
    FastBridge: Closing the Model-Based Realization Gap in Safety Filters on 3D Gaussian Splatting for Fast Quadrotor Flight
}

\author{Dario~Tscholl,
        Yashwanth~Nakka,
        and~Brian~Gunter}

\begin{document}

\maketitle
\thispagestyle{empty}
\pagestyle{empty}

\begin{figure*}[!b]
    \centering
    \includegraphics[height=4cm, width=\textwidth]{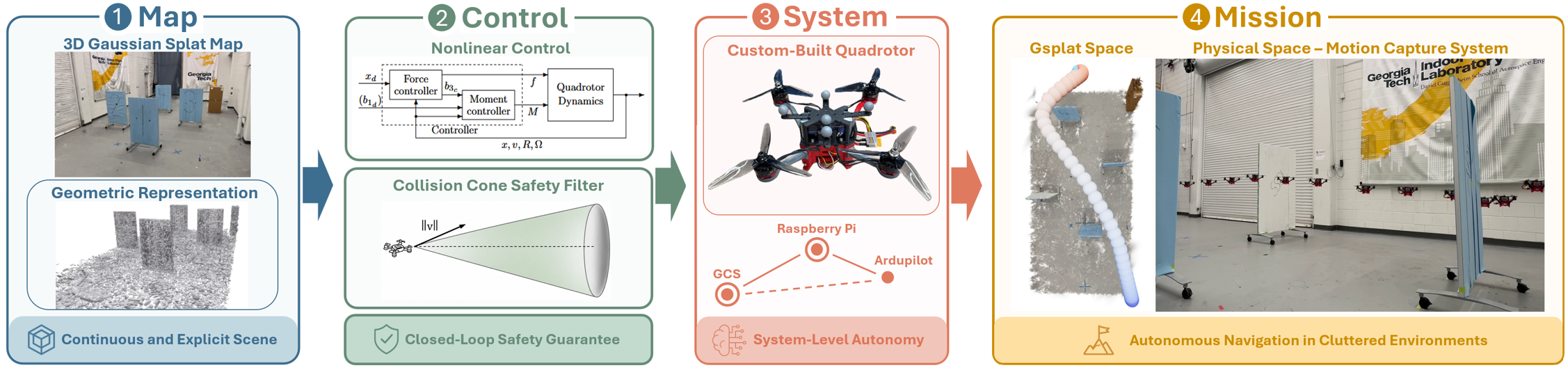}
    \caption{FastBridge architecture. FastBridge demonstrates nonlinear safety and control for a real-world quadrotor on 3DGS.}
    \label{fig:placeholder}
\end{figure*}

\begin{abstract}
Fast quadrotor flight requires safe obstacle avoidance under tight onboard compute limits. While 3D Gaussian Splatting (3DGS) provides a continuous, geometry-aware scene representation for perception-driven navigation, existing 3DGS safety filters use reduced-order models such as single- and double-integrators that ignore actuator limits and assume commanded accelerations are realized instantaneously. Building on an analytic collision cone barrier for 3DGS, we introduce a nonlinear, actuator-aware safety filter enforced through the full quadrotor dynamics. We derive a high-relative-degree collision cone exponential CBF and a backup CBF that preserves QP feasibility under input constraints using a forward-simulated backup policy. Compared with a state-of-the-art 3DGS safety filter, our approach reduces trajectory jerk by 47\% and runs 2.25$\times$ faster. We validate the method in simulation and on hardware for real-time navigation in cluttered, perception-derived environments.
\end{abstract}


\section{Introduction} \label{sec_introduction}
Quadrotors can navigate cluttered environments faster than any comparable aerial platform when flown aggressively. In time-critical missions such as search and rescue, disaster response, and rapid inspection, mission value decays with time, so flying fast translates directly into mission success~\cite{hanover2024survey}. Pushing a quadrotor to high speeds, however, drives the vehicle toward the limits of its dynamics and actuation while leaving a tight onboard compute budget for perception, planning, and control~\cite{foehn2022agilicious}, which makes fast, safe obstacle avoidance a defining challenge of agile autonomous flight.

Optimization-based methods exploit differential flatness to generate smooth, dynamically feasible trajectories, from minimum-snap polynomials~\cite{mellinger_minimum_2011} to time-optimal generation~\cite{penicka_minimum_2022} and model predictive contouring control at the limit of actuation~\cite{romero_model_2022}. Learning-based methods instead train policies mapping observations directly to commands, with reinforcement learning reaching champion-level racing~\cite{kaufmann_champion_2023} and rivaling optimal control at the platform's physical limits~\cite{song_reaching_2023}. Both families fold obstacle avoidance into a planning or training stage and offer limited runtime guarantees. Control barrier functions (CBFs), which are typically solved via an online quadratic program (QP), take a complementary view. Rather than re-planning, CBFs act as a lightweight safety filter that minimally modifies a nominal input to render a safe set forward invariant. This lightweight filtering is ideal for compute-constrained, safety-critical flight, but CBF guarantees and conservativeness depend heavily on how the environment is represented.

Traditional navigation pipelines encode obstacles using occupancy grids, point clouds, or signed-distance fields. Compared with radiance-field representations such as Neural Radiance Fields (NeRFs)~\cite{Mildenhall_nerf_2020} and 3D Gaussian Splatting (3DGS) \cite{kerbl_3d_2023}, these are discrete and lack the photometric information useful for leveraging semantics. NeRFs enable vision-only quadrotor navigation~\cite{adamkiewicz_vision_2022} but are implicit and costly to query for collision checking. 3DGS instead keeps geometry explicit as anisotropic Gaussian ellipsoids, allowing fast rendering and direct spatial queries, which makes it well suited to safe, real-time navigation. In terms of safety, prior works on 3DGS either enforce safety on an inflated approximation of the map or act directly on the splat ellipsoids. As an example of the former, PolyMerge builds a voxel occupancy grid, over-approximates the scene with convex polytopes, and enforces avoidance through a polytope-based CBF~\cite{hong_polymerge_2026}. Although lightweight, this adds conservatism. SAFER-Splat instead operates on the scene directly, applying a distance-based CBF to the splats themselves~\cite{chen_safer-splat_2025}. Exploiting the fact that each splat is an ellipsoid, previous work derives an analytic collision cone CBF that activates proactively along the direction of motion, yielding smoother avoidance at lower computational cost than distance-based filters~\cite{tscholl_perception_2025}.

These 3DGS safety filters all enforce safety on a reduced-order model whose input is the translational velocity or acceleration. This implicitly assumes any command is realized instantaneously, ignoring actuator limits and offering no guarantee that the safe set is control invariant under them. The result is a model-based realization gap: The barrier rate the filter commands and the rate the quadcopter achieves diverge, so forward invariance can be lost even when the commanded CBF condition holds. We close this gap with a nonlinear, actuator-aware safety filter that enforces the collision cone barrier through the full quadrotor dynamics. To guarantee QP feasibility under rotor limits, which cannot be assumed {\it a priori} during aggressive flight, we further introduce a collision cone backup CBF that implicitly represents a control-invariant subset of the safe set through forward simulation of a backup policy that brings the vehicle to rest.

This letter extends previous work~\cite{tscholl_perception_2025} by adding full nonlinear quadrotor dynamics, a realization error analysis, collision cone ECBF, backup CBF, actuator-aware constraints, and fast-flight demonstrations. Our main contributions are summarized as follows: 
\begin{itemize}
    \item[\textit{(i)}] We identify and characterize a model-based realization gap failure mode in reduced-order 3DGS safety filters, and quantify how the unrealized barrier rate grows with flight speed. 
    \item[\textit{(ii)}]  We derive a nonlinear collision cone ECBF that enforces safety through the true quadrotor dynamics, showing smoother and faster flight than a state-of-the-art 3DGS safety filter.
    \item[\textit{(iii)}]  We develop a collision cone backup CBF with a forward-simulated backup policy that guarantees QP feasibility under actuator constraints. 
    \item[\textit{(iv)}] We validate the collision cone backup CBF filter in both simulation and initial hardware experiments.
\end{itemize}

The remainder of this letter is organized as follows. Section \ref{sec_preliminaries} covers the problem setup and the ECBF and backup CBF preliminaries. Section \ref{sec_methods} develops the control stack: A geometric tracking controller and a feedback linearization of the nonlinear quadrotor model, followed by the collision cone CBF, its exponential extension through the true dynamics and the backup CBF that guarantees QP feasibility under actuator limits. Section \ref{sec_simulation} presents the realization-gap study, a state-of-the-art comparison and hardware validation, followed by a conclusion in Section \ref{sec_conclusion}.

\section{PRELIMINARIES} \label{sec_preliminaries}
    
    \subsection{Problem Formulation} \label{sec_problem}
    We consider a quadrotor flying through a cluttered environment that is reconstructed as a 3DGS map, in which the obstacles are the Gaussian splats themselves. The objective is to track a reference trajectory that drives the vehicle from an initial state $x(0)=x_0$ to a goal state $x(T)=x_f$ while remaining collision-free, i.e., keeping the vehicle clear of every splat, for all $t\in[0,T]$. We consider a control-affine system of the form
    \begin{equation} \label{eq_affine_control_system}
        \dot{x} = f(x) + g(x)u, \qquad x(0) = x_0,
    \end{equation}
    where $x \in \mathcal{X} \subset \mathbb{R}^n$ is the state and $u \in \mathcal{U} \subset \mathbb{R}^m$ the admissible control input. The functions $f: \mathbb{R}^n \rightarrow \mathbb{R}^n$ and $g : \mathbb{R}^n \rightarrow \mathbb{R}^{n \times m}$ are continuous and locally Lipschitz.
    The quadcopter is modeled as a rigid body on $\mathrm{SE}(3)$ with state
    $(p, \sigma, v, \omega)^\top \in \mathbb{R}^3 \times \mathrm{SO}(3) \times \mathbb{R}^3 \times \mathbb{R}^3$, whose elements denote the position, attitude, linear velocity, and angular velocity, respectively. The attitude is parametrized using Modified Rodrigues Parameters (MRPs) \cite{crassidis_attitude_1996}. The input is $u = (T, \tau_m)^\top \in \mathbb{R}^4$, where $T \in \mathbb{R}_{+}$ is the total thrust and $\tau_m \in \mathbb{R}^3$ are the body torques. The kinematic and dynamic equations are
    \begin{gather} \label{eq_mrp_dynamics}
        \dot p = v, \quad \dot \sigma = Z(\sigma) \omega, \\
        m\dot{v} = -mge_3 + TR(\sigma)^\top e_3, \quad
        J\dot{\omega} = -\omega \times J\omega + \tau_m,
    \end{gather}
    where $R(\sigma)$ and $Z(\sigma)$ denote the MRP rotation and attitude matrix, respectively.
    We represent the environment using 3D Gaussian Splatting \cite{kerbl_3d_2023}, which models the scene as a set of anisotropic Gaussian primitives. Each primitive carries a mean $\mu \in \mathbb{R}^3$, a covariance $\Sigma \in \mathbb{S}^3_{++}$, an opacity and
    spherical-harmonic color coefficients. However, for collision avoidance, only the geometric properties $(\mu, \Sigma)$ are relevant. Unlike a traditional covariance matrix, the covariance matrix  $\Sigma = R S S^\top R^\top$ used in 3DGS describes the spatial configuration of a splat where $R \in \mathrm{SO}(3)$ is defined by a unit quaternion and $S = \mathrm{diag}(s_1, s_2, s_3)$ with $s_i \in \mathbb{R}_{++}$ is a diagonal scaling matrix. Geometrically, each splat is described by its confidence ellipsoid
    \begin{equation} \label{eq_confidence_ellipsoid}
        \mathcal{E} = \{x \in \mathbb{R}^3 \mid (x-\mu)^\top \Sigma^{-1} (x-\mu) \leq c^2\},
    \end{equation}
    where $c^2 = \chi^2_{3, 0.99}$ is the 99th percentile of the chi-squared distribution with three degrees of freedom (setting $c=1$ recovers the standard triaxial ellipsoid). 
    
    In terms of safety, the vehicle is collision-free when its position $p$ lies outside the ellipsoid $\mathcal{E}_i$ of every splat $i$. Safety is therefore formalized as the forward invariance of a safe set \cite{ames_control_2016}. For a continuously differentiable function $h:\mathcal{X}\to\mathbb{R}$, define the safe set as the superlevel set $\mathcal{C}=\{x\in\mathcal{X}\mid h(x)\geq 0\}$, with boundary $\partial\mathcal{C}=\{x\mid h(x)=0\}$ and interior $\mathrm{Int}(\mathcal{C})=\{x\mid h(x)>0\}$. To enforce forward invariance of $\mathcal{C}$ through feedback, we use CBFs \cite{ames_control_2019, wieland_constructive_2007}. The function $h$ is a CBF for (\ref{eq_affine_control_system}) on $\mathcal{C}$ if there exists an extended class-$\mathcal{K}$ function $\alpha$ (continuous, strictly increasing, with $\alpha(0)=0$) such that
    \begin{equation}\label{eq_cbf_def}
        \sup_{u\in\mathcal U}\big[L_f h(x)+L_g h(x)\,u\big]\ge -\alpha\big(h(x)\big)\quad\forall\,x\in\mathcal X .
    \end{equation}
    Since \eqref{eq_cbf_def} is affine in $u$, it defines at each state a set of admissible inputs and any locally Lipschitz feedback derived from it renders $\mathcal C$ forward invariant. This affine structure makes CBFs cheap to enforce in a quadratic program (QP) that minimally adjusts a reference control input $\hat{u}$
    \begin{equation} \label{eq_cbf_qp}
        \begin{aligned}
            u^*= \; & \underset{u \in \mathcal{U}}{\arg \min} \; \tfrac{1}{2} 
            \| u - \hat{u} \|^2 \\
            \mathrm{s.t.} \; & L_f h(x) + L_g h(x) u \geq -\alpha\bigl(h(x)\bigr),
        \end{aligned}
    \end{equation}
    with $u^*$ being the resulting safe control input. This CBF-QP formulation is often also referred to as a safety filter and has become a standard approach for enforcing safety in robotic systems. The specific barrier $h$ used in this letter is derived in Section \ref{subsec_cc}.

    \subsection{Exponential Control Barrier Function} \label{subsubsec:ecbf}
    The CBF condition in \eqref{eq_cbf_def} presumes that $h$ has relative degree one. For constraints such as a position limit on a force- or torque-controlled system, this assumption does not hold and $h$ has relative degree $r\ge 2$. Exponential CBFs (ECBFs) extend CBFs to arbitrary relative degree \cite{nguyen_exponential_2016}, with the barrier-derivative vector $\eta_b(x)=[\,h(x)\;L_fh(x)\;\cdots\;L_f^{r-1}h(x)\,]^\top$. The barrier $h$ is an ECBF if there exists $K_\alpha\in\mathbb{R}^r$ with
    \begin{equation} \label{eq:ecbf}
        \sup_{u\in\mathcal U}\big[L_f^r h(x)+L_gL_f^{r-1}h(x)\,u\big]\ge -K_\alpha\,\eta_b(x),
    \end{equation}
    for all $x\in\mathrm{Int}(\mathcal C)$, where $K_\alpha$ is chosen by pole placement so that the induced barrier dynamics are Hurwitz, guaranteeing forward invariance of $\mathcal C$ whenever $h(x_0)\ge 0$~\cite{nguyen_exponential_2016, ames_control_2019}. The condition in \eqref{eq:ecbf} remains affine in $u$ and can therefore be directly incorporated into a QP of the same form as \eqref{eq_cbf_qp}, preserving the real-time tractability of the standard CBF approach while accomodating high-relative-degree safety constraints.
        
    \subsection{Backup Control Barrier Functions} \label{subsubsec:backup_cbf} 
    Athough \eqref{eq_cbf_def} guarantees forward invariance of $\mathcal C$, the CBF-QP is feasible only if $\mathcal C$ is control invariant under the input bound $\mathcal U$. Explicitly verifying this is generally intractable for high-dimensional nonlinear systems since Hamilton-Jacobi reachability scales poorly beyond four to five states, while Sum-of-Squares programming is typically limited to roughly ten states with polynomial dynamics \cite{chen_backup_2021}. Backup CBFs \cite{chen_backup_2021,gurriet_online_2018} avoid this explicit construction by defining a control-invariant subset of $\mathcal C$ through forward simulation. Let $\mathcal S_0\subseteq\mathcal C$ be a known invariant set rendered invariant by a backup policy $\pi:\mathcal X\to\mathcal U$, and let $\Phi(x,t)$ denote the flow of $\dot x=f(x)+g(x)\pi(x)$. For a horizon $T$, the set $\mathcal S$ consists of states whose backup trajectories remain in $\mathcal C$ over $[0,T]$ and reach $\mathcal S_0$ at $T$, yielding $\mathcal S_0\subseteq\mathcal S\subseteq\mathcal C$. The backup CBF is defined by the minimum of the clearance along the simulated trajectory and the terminal margin in $\mathcal S_0$, with $\mathcal S$ as its superlevel set. In practice, the horizon is discretized and the state sensitivity $\partial\Phi/\partial x$ is integrated alongside the flow, yielding affine constraints in $u$. For these constraints to be well posed the backup policy must be admissible ($\pi(x)\in\mathcal U$ for all $x$) and continuously differentiable, so that the flow $\Phi$ and its sensitivity $\partial\Phi/\partial x$ exist and are unique, and it must drive the closed loop into the invariant terminal set $\mathcal S_0$. Under mild controllability assumptions, the backup CBF has relative degree one and enters the CBF-QP directly, without a high-relative-degree extension. We derive the collision cone backup CBF in Section~\ref{subsec_backup_cc}.

\section{METHODS} \label{sec_methods}
We design our control stack in two layers: A nominal tracking controller (well studied) and a safety filter that minimally modifies it (proposed contribution). Since we operate on the full nonlinear quadrotor, we use a geometric tracking controller driven by a smooth minimum snap/septic (7-${th}$ degree) polynomial reference (Sec. \ref{subsec_geometric_control}). Furthermore, we recast the dynamics into an input–output linear form through dynamic feedback linearization (Sec. \ref{subsec_dfl}), which is needed because the true thrust and torque inputs first reach the position output at the snap. We then construct the collision cone filter in increasing fidelity (Secs.~\ref{subsec_cc}--\ref{subsec_backup_cc}).

    \subsection{Geometric Control} \label{subsec_geometric_control}
    A quadcopter, as described in Section \ref{sec_problem}, is differentially flat, so a smooth reference $(p_{\mathrm{ref}},\psi_{\mathrm{ref}})$ and its derivatives determine the reference thrust, attitude and angular rates. We combine this flatness construction with the geometric tracking law of \cite{lee_geometric_2010} where we use the jerk and snap to generate analytic feed-forward terms rather than relying on numerical differentiation \cite{mellinger_minimum_2011}. With position and velocity errors $e_p = p - p_{\mathrm{ref}}$, $e_v = v - v_{\mathrm{ref}}$ and positive-definite gains $K_p,K_v$, the commanded inertial force $f_c = m\bigl(a_{\mathrm{ref}} + g e_3\bigr) - K_p e_p - K_v e_v,$
    determines the desired body-$z$ axis $b_3 = f_c/\lVert f_c\rVert$ (for $f_c\neq 0$) and the collective thrust $T = f_c^\top R e_3$. Differentiating $f_c$ yields the reference angular velocity and acceleration analytically. The attitude is then regulated by the geometric body-torque of~\cite{lee_geometric_2010}.

    \subsection{Dynamic Feedback Linearization} \label{subsec_dfl}
    For the high-order safety filter, we exploit the quadrotor's differential flatness to impose constraints on the translational motion directly in flat-output coordinates $y = [x,y,z,\psi(\sigma)]^\top$. Because the quadcopter is underactuated, the torques reach the position output only at higher derivatives, so static feedback does not linearize the system. Following \cite{cai_model_2021}, we add a dynamic extension on the thrust channel, giving the augmented state $\bar{x} = [p,\sigma,v,\omega,u_T,\dot u_T]^\top$ ($u_T = T/m$) and input $\bar{u} = [\ddot u_T,\tau_m]^\top$. Differentiating $y_p$ four times and $y_\psi$ twice gives the control-affine form $\bar{y} = F_{\mathrm{tot}}(\bar{x}) + M(\bar{x})\,\bar{u}$, with $M$ invertible on the operating domain and thus away from $u_T = 0$ (explicit $F_{\mathrm{tot}},M$ in~\cite{cai_model_2021}). Including yaw keeps $M$ square and avoids internal zero dynamics. The linearizing input $\bar{u} = -M^{-1}F_{\mathrm{tot}} + M^{-1}v$ with diffeomorphism $w = \xi(\bar{x}) = [p,\dot{p},\ddot{p},p^{(3)},\psi,\dot{\psi}]^\top$ reduces the closed loop to Brunovsky form $\dot{w} = Aw + Bv$, $y = Cw$, with the new input $v = [y_p^{(4)\top},\ddot{y}_\psi]^\top$.

    \subsection{Collision Cone Control Barrier Function} \label{subsec_cc}
    A collision cone characterizes a set of relative velocities that would ultimately lead to collision. As introduced in Section \ref{sec_problem}, each Gaussian splat can be formulated as an ellipsoid. Hence, if we assume the quadcopter to be a point robot moving at a constant velocity along a ray, we can solve a ray-ellipsoid intersection problem to obtain the necessary and sufficient conditions for a collision cone on 3DGS
    \begin{align}
       (v^\top A v)(r^\top A r - c^2) - (r^\top A v)^2 &\le 0, \label{eq:cone_main} \\
       r^\top A v &\ge 0. \label{eq:cone_approach}
    \end{align}
    The derivation can be found in \cite{tscholl_perception_2025}. To enforce safety, we take the complement of the collision cone which yields a collision cone control barrier function
    \begin{equation}\label{eq:cc_h}
        h(r,v) = \underbrace{(v^\top A v)}_{\beta}\,
                 \underbrace{(r^\top A r - c^2)}_{\gamma}
                 -\underbrace{(r^\top A v)^2}_{\delta^2}\ \ge\ 0,
    \end{equation}
    with the safe set $\mathcal{C}_i := \{(p,v) : h_i(p,v) \geq 0 \}$ defined for the $i$-th splat. The scalars $\beta$, $\gamma$ and $\delta$ are introduced only to simplify notation. Unlike distance-based barrier functions, the collision cone barrier accounts for the direction of motion. As a result, the constraint activates proactively when the robot is moving toward an obstacle, even before the robot is close to an ellipsoid. For a double-integrator (DI) model, the input is the translational acceleration, so the collision cone barrier has relative degree one. In that setting, the standard CBF condition is affine in the acceleration and yields a simple linear inequality in the safety filter. 

    However, this model implicitly assumes that any commanded acceleration can be realized instantaneously and without actuator limits. This assumption is not valid for a real-world quadrotor, resulting in a realization gap between the linear and nonlinear system as mentioned in Section \ref{sec_introduction}. The DI safety filter returns a commanded translational acceleration $a_{\mathrm{cmd}}\in\mathbb{R}^3$, which is also the control input for its double-integrator model $\dot v = a_{\mathrm{cmd}}$. On the quadrotor, this acceleration is applied as the gravity-compensated force $f_{\mathrm{cmd}} = m(a_{\mathrm{cmd}} + g e_3)$, yet the acceleration the airframe actually achieves follows from the true dynamics \eqref{eq_mrp_dynamics},
    \begin{equation}
        a_{\mathrm{real}} := \dot v = -g e_3 + \tfrac{T}{m} R(\sigma)^\top e_3 ,
        \label{eq:areal}
    \end{equation}
    with $T$, $\sigma$ the thrust and attitude given by the closed loop. The \emph{realization error} is then
    \begin{equation}
        e_a := a_{\mathrm{real}} - a_{\mathrm{cmd}}, \label{eq:realization_error}
    \end{equation}
    hence, the gap between the acceleration the reduced-order filter commands and the one the airframe realizes. Under perfect attitude-thrust tracking $T R(\sigma)^\top e_3 = f_{\mathrm{cmd}}$ and so $e_a = 0$ (the ideal DI, in gray shown in Fig.~\ref{fig_realization_error}).

    \begin{proposition}[Realization gap in the barrier rate] \label{prop_realization}
        Let the collision cone barrier $h(p,v)$ be continuously differentiable, with $a_{\mathrm{cmd}}$, $a_{\mathrm{real}}$, $e_a$ as in \eqref{eq:areal}--\eqref{eq:realization_error}. Then the barrier rate realized on the airframe and the rate predicted by the reduced-order model are related by
        \[
            \dot h_{\mathrm{real}} = \dot h_{\mathrm{DI}} + \nabla_v h^\top e_a .
        \]
    \end{proposition}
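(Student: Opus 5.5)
The identity follows from the chain rule along trajectories, once the two rates are defined consistently. Because the barrier depends only on the translational state, $h=h(p,v)$ with $r=p-\mu_i$, I will write its time derivative along any trajectory of the position--velocity pair as
\[
\dot h = \nabla_p h^\top \dot p + \nabla_v h^\top \dot v = \nabla_p h^\top v + \nabla_v h^\top \dot v .
\]
Continuous differentiability of $h$ (assumed in the statement, and immediate from \eqref{eq:cc_h} since $h$ is polynomial in $(r,v)$) justifies this step. The kinematic relation $\dot p = v$ holds identically in both the reduced-order model and the full model \eqref{eq_mrp_dynamics}. Hence the term $\nabla_p h^\top v$ is the same in both rates, evaluated at the same current state $(p,v)$.

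I then define the two rates precisely. The rate predicted by the DI model is obtained by substituting the model's input $\dot v = a_{\mathrm{cmd}}$:
\[
\dot h_{\mathrm{DI}} = \nabla_p h^\top v + \nabla_v h^\top a_{\mathrm{cmd}}.
\]
The rate actually realized on the airframe uses the true velocity derivative \eqref{eq:areal}:
\[
\dot h_{\mathrm{real}} = \nabla_p h^\top v + \nabla_v h^\top a_{\mathrm{real}}.
\]
Both are evaluated at the same instantaneous state, which is the point at which the filter computes $a_{\mathrm{cmd}}$.

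Subtracting the two expressions, the shared position term cancels, leaving $\dot h_{\mathrm{real}}-\dot h_{\mathrm{DI}} = \nabla_v h^\top (a_{\mathrm{real}}-a_{\mathrm{cmd}})$. By \eqref{eq:realization_error} this equals $\nabla_v h^\top e_a$, which is the claim.

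The only real obstacle is conceptual rather than computational. One must make explicit that $\dot h_{\mathrm{DI}}$ means the model-predicted rate, with $\dot v$ replaced by $a_{\mathrm{cmd}}$ at the true current state. It does not mean the rate along a separate DI trajectory, which would diverge from the real one over time and make the two $\nabla_p h^\top v$ terms differ. With this definition the result is a pointwise identity in time.

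As a remark, I would add the explicit gradient $\nabla_v h = 2\gamma A v - 2\delta A r$ from \eqref{eq:cc_h}. This shows the unrealized barrier rate scales with $\|v\|$, in line with contribution \emph{(i)}. I would also note that $e_a=0$ recovers the ideal DI case.
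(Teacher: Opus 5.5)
Your proof is correct and follows essentially the same argument as the paper. You apply the chain rule with $\dot p = v$ common to both models, evaluate $\dot v$ at $a_{\mathrm{cmd}}$ versus $a_{\mathrm{real}}$ at the same current state, and subtract so the shared drift $\nabla_p h^\top v$ cancels. Your two additions are sound and are left implicit in the paper: the clarification that the rates are compared pointwise rather than along separate trajectories, and the gradient $\nabla_v h = 2A(\gamma v - \delta r)$, which matches the control direction $w$ used in the ECBF derivation.
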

    \begin{proof}
    The barrier depends on the state only through position (via $r=\mu-p$) and velocity $v$, so along any trajectory
        \[
            \dot h = \nabla_p h^\top \dot p + \nabla_v h^\top \dot v
                   = \nabla_p h^\top v + \nabla_v h^\top \dot v ,
        \]
    where $\dot p = v$ holds for both models. Evaluating with the modeled input $\dot v = a_{\mathrm{cmd}}$ gives $\dot h_{\mathrm{cmd}} = \nabla_p h^\top v + \nabla_v h^\top a_{\mathrm{cmd}}$, and with the achieved input $\dot v = a_{\mathrm{real}}$ gives $\dot h_{\mathrm{real}} = \nabla_p h^\top v + \nabla_v h^\top a_{\mathrm{real}}$. Subtracting cancels the common drift $\nabla_p h^\top v$ and leaves 
    $\dot h_{\mathrm{real}} - \dot h_{\mathrm{cmd}}
       = \nabla_v h^\top(a_{\mathrm{real}}-a_{\mathrm{cmd}})
       = \nabla_v h^\top e_a$.
    \end{proof}
    
    The DI filter selects $a_{\mathrm{cmd}}$ so that the relative-degree-one condition $\dot h_{\mathrm{cmd}} + \alpha(h)\ge 0$ holds in the reduced model. By
    Proposition~\ref{prop_realization}, the condition actually realized on the airframe is
    \begin{equation}
        \dot h_{\mathrm{real}} + \alpha(h)
        = \underbrace{\dot h_{\mathrm{cmd}} + \alpha(h)}_{\ge\,0\ \text{(enforced)}}
          + \nabla_v h^\top e_a .
    \end{equation}
    Whenever the unmodeled term $\nabla_v h^\top e_a$ is sufficiently negative, the realized condition is violated even though the commanded one is satisfied, and forward invariance of the safe set is lost.

    \subsection{Exponential Collision Cone CBF} \label{subsec_nonlinear_cc}
    In the nonlinear case, e.g., through dynamic feedback linearization (DFL), the true input $(T,\tau_m)$ enters the flat output first at the snap $p^{(4)}$. Therefore, for a nonlinear model, the collision cone has relative degree three. Note that instead of applying the DFL to the dynamical system, we integrate the DFL into the CBF constraint which allows us to formulate the QP in \eqref{eq_nonlinear_cbf_qp} with the true nonlinear input. We enforce the collision cone with a third-order exponential CBF (ECBF) with constraint functions $\psi_0 = h$ and $\psi_i = \dot\psi_{i-1} + \alpha_i(\psi_{i-1})$. Considering linear class-$\mathcal{K}$ functions $\alpha_1=\alpha_2=\alpha_3=\lambda$, where $\lambda>0$ (a triple pole at $-\lambda$), gives the high-order CBF condition $L_f^3 h + L_g L_f^2 h\,u + \alpha_3(\psi_2) \ge 0$. Collecting the snap yields $\dot\psi_2 = w^\top p^{(4)} + \Phi(r,v,a,j)$, with control direction $w = 2A(\gamma v - \delta r)$ and a known drift $\Phi$ in the lower derivatives. The snap is itself affine in the input,
    \begin{equation}\label{eq:snap_affine}
      p^{(4)} = F_p(\bar x, u_T) + G_T(\bar x, u_T)\,\ddot u_T
              + G_\tau(\bar x, u_T)\,\tau_m,
    \end{equation}
    where $T$ enters through a finite-difference thrust cascade and $\tau_m$ enters directly. Substituting \eqref{eq:snap_affine} and the triple-pole gains gives a constraint affine in $(T,\tau_m)$,
    \begin{equation} \label{eq:ecbf_constraint}
      w^\top G_\tau \tau_m \ge
      -w^\top G_T \ddot u_T - w^\top F_p - \Phi - 3\lambda \ddot h- 3\lambda^2 \dot h-\lambda^3 h.
    \end{equation}
    Therefore, we can formulate the safety filter as an ECBF-QP subject to the hard safety constraint \eqref{eq:ecbf_constraint} and a rotor polytope to enforce actuator constraints
    \begin{equation} \label{eq_nonlinear_cbf_qp}
        \begin{aligned}
            u^*= \; & \underset{u}{\arg \min} \; \tfrac{1}{2}
            \| u - \hat{u} \|^2 \\
            \mathrm{s.t.} \ \
            & \text{\eqref{eq:ecbf_constraint}} \\
            \ \ & f_{\min} \le G^{-1}u \le f_{\max}, \\
        \end{aligned}
    \end{equation}
    where $u = (T, \tau_m)$ is the nonlinear control input and $G$ is a rotor allocation matrix.
    
    \subsection{Collision Cone Backup CBF} \label{subsec_backup_cc}
    The ECBF-QP~\eqref{eq_nonlinear_cbf_qp} is feasible only if $\mathcal{C}$ is control invariant under~$\mathcal{U}$ (Sec.~\ref{subsubsec:backup_cbf}). During fast, aggressive flight this cannot be guaranteed {\it a priori}, which motivates the collision cone backup CBF. Rather than verifying control invariance offline, the backup CBF represents a control-invariant subset of $\mathcal{C}$ implicitly, through forward simulation of a backup policy that brings the vehicle to rest. A second motivation is degeneracy. As $v\to0$ the linear and ECBF collision cone constraints vanish, since $\beta=\delta=0\Rightarrow h=0$. From a safety standpoint this is desirable because if the QP becomes infeasible, the constraint naturally drives the robot to rest. For the backup CBF we use a position-only terminal barrier that remains well defined at zero velocity. Concretely, the running constraint uses the linear collision cone CBF~\eqref{eq:cc_h}, while the terminal constraint uses $\gamma_i(p) = r_i^\top A_i r_i - c^2$, whose zero level set is the Minkowski-inflated safe zone and which is non-degenerate at $v=0$. Both $\nabla h_i$ and $\nabla\gamma_i$ are available in closed form.

        \subsubsection{Backup Policy and Implicit Control-Invariant Set}
        We fix a backup policy $\pi_b:\mathbb{R}^{12}\!\to\!\mathcal{U}$, a saturated hover regulator that drives the vehicle to a stationary hover. The raw commands are a gravity feed-forward with vertical-velocity damping and a PD-plus-braking torque,
        \begin{equation} \label{eq:pi-unsat}
        \begin{aligned}
          \widetilde{T} &= mg - m\,k_{v_z}v_z, \\
          \widetilde{\tau}_m &= -K_\sigma\sigma - K_\omega\omega + k_{v_h}\,S\,R(\sigma)\,v,
          \end{aligned}
        \end{equation}
        with positive scalar gains $k_{vz}, k_{vh}, K_\sigma, K_\omega > 0$ and the constant selection matrix $S = -\hat{e}_3$, where $\hat{(\cdot)}$ denotes the skew-symmetric (cross-product) operator, so that $S\,R(\sigma)v$ rotates the horizontal body velocity by $-\tfrac{\pi}{2}$ to produce a braking tilt. Each channel is then mapped smoothly into a box centered at hover,
        \begin{equation} \label{eq:pi-sat}
        \begin{aligned}
          T_{\mathrm{sat}} &= mg + c_T\tanh\!\Big(\tfrac{\kappa(\widetilde{T}-mg)}{c_T}\Big), \\
          \tau^{\mathrm{sat}}_{m,i} &= \tau_{i,c}\tanh\!\Big(\tfrac{\kappa\,\widetilde{\tau}_{m,i}}{\tau_{i,c}}\Big),
          \end{aligned}
        \end{equation}
        giving $\pi_b=(T_{\mathrm{sat}},\tau^{\mathrm{sat}}_{m,x},\tau^{\mathrm{sat}}_{m,y},\tau^{\mathrm{sat}}_{m,z})^\top$, where $\kappa=\beta_{\mathrm{sat}}/10$ with $\beta_{\mathrm{sat}}>0$ sets the sharpness of the saturation.

        The caps $c_T,\tau_{i,c}$ are sized to the per-rotor thrust margin about hover. Let $G\in\mathbb{R}^{4\times4}$ map per-rotor thrust $f$ to input $u=Gf$ with bounds $f_{\min},f_{\max}$, and let $f_{\mathrm{hov}}=\tfrac{mg}{4}\mathbf{1}_4$, $u_{\mathrm{hov}}=Gf_{\mathrm{hov}}=(mg,0,0,0)^\top$ be the hover allocation, feasible whenever $f_{\min,i}\le mg/4\le f_{\max,i}$. With per-rotor thrust margin $\eta_i:=\min(f_{\max,i}-f_{\mathrm{hov},i},\,f_{\mathrm{hov},i}-f_{\min,i})$ and a design weight $\mathbf{d}\in\mathbb{R}^4_+$ ($\mathbf{d}=(2.0,3.0,0.3,0.05)$ in our tests), set
        \begin{equation} \label{eq:s-def}
          s:=\eta_{\mathrm{frac}}\min_i\frac{\eta_i}{(|G^{-1}|\mathbf{d})_i}, \quad (c_T,c_x,c_y,c_z)^\top:=s\,\mathbf{d},
        \end{equation}
        with safety fraction $\eta_{\mathrm{frac}}\in(0,1)$ ($0.95$ in our tests), $T_{\mathrm{cap/floor}}=mg\pm c_T$, and $\tau_{i,c}=c_i$.
        
        \begin{proposition}[Admissibility]\label{prop:admissibility}
            For every $x\in\mathbb{R}^{12}$, $\pi_b(x)\in\mathcal{U}$.
        \end{proposition}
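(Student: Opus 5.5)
The argument rests on the fact that $\tanh$ maps $\mathbb{R}$ into $(-1,1)$, so the saturation \eqref{eq:pi-sat} confines $\pi_b(x)$ to a box around hover no matter how large the raw commands \eqref{eq:pi-unsat} are. We then check that this box lies inside $\mathcal U$, interpreted as in \eqref{eq_nonlinear_cbf_qp} as the rotor polytope $\{u: f_{\min}\le G^{-1}u\le f_{\max}\}$. Nothing about $x$ enters except through the argument of $\tanh$, so the conclusion holds for every $x\in\mathbb{R}^{12}$.

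\emph{Step 1 (box bound).} For any $x$, \eqref{eq:pi-sat} gives
\[
|T_{\mathrm{sat}}-mg| = c_T\,\bigl|\tanh(\cdot)\bigr| < c_T,
\qquad
|\tau^{\mathrm{sat}}_{m,i}| < \tau_{i,c}=c_i .
\]
Writing $\Delta u := \pi_b(x)-u_{\mathrm{hov}}$, this reads componentwise $|\Delta u| \le s\,\mathbf{d}$, using $(c_T,c_x,c_y,c_z)=s\mathbf{d}$ from \eqref{eq:s-def}.

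\emph{Step 2 (map to rotor thrusts).} The per-rotor thrust is
\[
f = G^{-1}\pi_b(x) = f_{\mathrm{hov}} + G^{-1}\Delta u .
\]
By the triangle inequality, row by row,
\[
|(G^{-1}\Delta u)_i| \le \bigl(|G^{-1}|\,|\Delta u|\bigr)_i \le s\,(|G^{-1}|\mathbf{d})_i ,
\]
where $|G^{-1}|$ is the entrywise absolute value. The second inequality uses that $|G^{-1}|$ has nonnegative entries, so it is monotone on nonnegative vectors.

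\emph{Step 3 (use the definition of $s$).} From \eqref{eq:s-def}, $s\,(|G^{-1}|\mathbf{d})_i \le \eta_{\mathrm{frac}}\,\eta_i < \eta_i$ for every $i$. By the definition of $\eta_i$ as the smaller of the two margins about hover, this gives $f_{\min,i} < f_i < f_{\max,i}$. Hence $\pi_b(x)\in\mathcal U$.

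The delicate points are bookkeeping rather than analysis, and I would state them explicitly:
\begin{itemize}
  \item Hover feasibility, $f_{\min,i}\le mg/4\le f_{\max,i}$, is needed so that $\eta_i\ge 0$ and $s$ is well defined. If some $\eta_i=0$, then $s=0$, the box degenerates to the hover point, and the claim still holds trivially.
  \item We need $(|G^{-1}|\mathbf d)_i>0$ for every $i$. This holds because $G$ is invertible, so no row of $G^{-1}$ vanishes, and $\mathbf d$ has strictly positive entries.
  \item If $\mathcal U$ carries an extra constraint $T\in\mathbb{R}_+$, it is implied by $T_{\mathrm{sat}}\ge mg-c_T$, provided $c_T\le mg$. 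This follows from the rotor bounds when $f_{\min}\ge 0$.
\end{itemize}
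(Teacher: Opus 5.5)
Your proof is correct and matches the paper's argument step for step: the $\tanh$ caps give $|\pi_b(x)-u_{\mathrm{hov}}|\le s\mathbf d$ componentwise, the entrywise absolute value $|G^{-1}|$ turns this into $|f_i-f_{\mathrm{hov},i}|\le s(|G^{-1}|\mathbf d)_i$, and the definition of $s$ bounds it by $\eta_{\mathrm{frac}}\eta_i\le\eta_i$ (the paper writes this last step through the minimizing rotor $i^\star$). Your bookkeeping remarks make explicit what the paper leaves implicit, with one small quibble: when $s=0$, \eqref{eq:pi-sat} is undefined rather than trivially admissible, because $c_T$ and $\tau_{i,c}$ appear in denominators, so you effectively need $\eta_i>0$ for all $i$ (or a continuous extension of the policy).
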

        \begin{proof}
            Let $\Delta:=\pi_b-u_{\mathrm{hov}}$. By \eqref{eq:pi-sat}, $|\Delta_T|\le c_T$ and $|\Delta_{\tau_i}|\le\tau_{i,c}=c_i$. The per-rotor thrust is $f=G^{-1}\pi_b=f_{\mathrm{hov}}+G^{-1}\Delta$, so $|f_i-f_{\mathrm{hov},i}|\le\sum_k|G^{-1}_{ik}|\,c_k=(|G^{-1}|s\mathbf{d})_i$. Substituting $s$ gives,
            \begin{equation*}
              |f_i-f_{\mathrm{hov},i}|\le\eta_{\mathrm{frac}}\,\frac{\eta_{i^\star}}{(|G^{-1}|\mathbf{d})_{i^\star}}\,(|G^{-1}|\mathbf{d})_i\le\eta_{\mathrm{frac}}\,\eta_i,
            \end{equation*}
            where $i^\star$ is the rotor attaining the minimum in \eqref{eq:s-def}. With the thrust margin definition this gives $f_{\min,i}\le f_i\le f_{\max,i}$, i.e.\ $\pi_b(x)\in\mathcal{U}$, with strict slack $1-\eta_{\mathrm{frac}}$.
        \end{proof}
        
        This construction meets the three requirements of Sec.~\ref{subsubsec:backup_cbf}. \emph{(i) Admissibility:} $\pi_b(x)\in\mathcal U$ for all $x$ (Prop.~\ref{prop:admissibility}) where the cap scale is chosen so that even under worst-case simultaneous saturation every per-rotor thrust stays in $[f_{\min},f_{\max}]$. \emph{(ii) Regularity:} Since $\tanh$ and $R(\sigma)$ are smooth away from the MRP singularity, $\pi_b\in C^\infty$ and is globally Lipschitz (App.~\ref{app:regularity}), so the flow $\Phi$ and sensitivity $Q$ in~\eqref{eq:variational} exist and are unique on $[0,T]$. \emph{(iii) Convergence:} The only equilibria of $f_{\mathrm{cl}}$ are the hover manifold $\mathcal E=\{(p,0,0,0)\}$, which is locally exponentially stable on the $(\sigma,v,\omega)$ block (App.~\ref{app:stability}), so $\|v_{\Phi_T}\|,\|\omega_{\Phi_T}\|$ decay exponentially along $\Phi$ and the terminal residual $\rho_i$ in Prop.~\ref{prop:feasibility} is small for a sufficiently long horizon.
        
        Let $\Phi(x,s)$ denote the time-$s$ flow of $f_{\mathrm{cl}}$ from ${x}$. The backup hover set $\mathcal{S}_0$ (collision-free states with $v,\omega,\sigma\!\approx\!0$) is control invariant under $\pi_b$. Following \cite{gurriet_online_2018}, the $T$-horizon constrained reachable set
        \begin{equation*}
          \mathcal{S} = \{{x} : \Phi({x},s)\in\mathcal{C}\ \forall
          s\in[0,T]\ \wedge\ \Phi({x},T)\in\mathcal{S}_0\}
        \end{equation*}
        is a larger control-invariant subset of $\mathcal{C}=\{h_i\ge0\}$, represented implicitly by forward integration rather than computed offline.
        
        \subsubsection{Backup Trajectory and Sensitivity Propagation}
        At each control instant we forward-integrate, from the current state ${x}$, the joint flow and its state sensitivity $Q(s)=\partial\Phi({x},s)/\partial x$,
        \begin{equation}   \label{eq:variational}
          \dot{\Phi}=f_{\mathrm{cl}}(\Phi), \qquad \dot{Q}=J_{\mathrm{cl}}(\Phi)\,Q,
        \end{equation}
        with $\Phi(0)=x$, $Q(0)=I_{12}$, and $J_{\mathrm{cl}}=\partial f_{\mathrm{cl}}/\partial x$ the closed-loop Jacobian (derived analytically through the MRP kinematics, $\text{SO}(3)$ coupling, and policy saturation). Equation~\eqref{eq:variational} is integrated by RK4 and sampled at $N{+}1$ nodes $s_k=kT/N$, yielding $\{\Phi_k,Q_k\}_{k=0}^{N}$, where $\Phi_k:=\Phi(x,s_k)$.
        
        \subsubsection{Feasible-by-Construction Safety Filter}
        We track each running barrier at a fixed horizon age where at every control instant we look a fixed time $s_k$ ahead of the current state and require the predicted value $\eta_{i,k}(x):=h_i(\Phi_k)$ to stay nonnegative. Since the age $s_k$ is fixed while the state evolves under $\dot x=f(x)+g(x)u$, the chain rule with $Q_k$ gives $\dot\eta_{i,k}=\nabla h_i(\Phi_k)^\top Q_k\big(f(x)+g(x)u\big)$. Imposing $\dot\eta_{i,k}+\alpha_R\eta_{i,k}\ge0$ yields the running constraint, affine in $u$,
        \begin{equation} \label{eq:running}
            \begin{aligned}
                \nabla h_i(\Phi_k)^\top Q_k\,g(x)\,u &+
                \nabla h_i(\Phi_k)^\top Q_k\,f(x) \\
                &+ \alpha_R\,h_i(\Phi_k) \ge 0,
            \end{aligned}
        \end{equation}
        for every active obstacle $i$ and sample $k=0,\dots,N$. This is the mechanism by which the high-relative-degree obstacle constraints are reduced to a single relative-degree-one condition on $u$ where the sensitivity $Q_k$ propagates the input's effect on the future state back to the present \cite{chen_backup_2021}. The terminal constraint is imposed at the fixed horizon $s=T$, whose age does not shrink and therefore carries no intrinsic term,
        \begin{equation}
          \nabla\gamma_i(\Phi_T)^\top Q_T\big(f(x)+g(x)u\big)
          + \alpha_T\,\gamma_i(\Phi_T)\ge0,
          \label{eq:terminal}
        \end{equation}
        enforcing that the hover state reached by the backup is itself collision-free. The complete safety filter is the collision cone backup CBF-QP
        \begin{equation} \label{eq:cbfqp}
            \begin{aligned}
                u^*= \; & \underset{u}{\arg \min} \; \tfrac{1}{2}\| u - \hat{u} \|^2 \\
                \mathrm{s.t.} \ \
                & \eqref{eq:running}\ \text{for all active }(i,k), \\
                & \eqref{eq:terminal}\ \text{for all active }i, \\
                & f_{\min} \le G^{-1}u \le f_{\max},
            \end{aligned}
        \end{equation}
        a convex program solved online by an interior-point method. The defining property of the construction is that $\pi_b$ is itself a feasible point of \eqref{eq:cbfqp} whenever the simulated trajectory is safe, which we now make precise.
        
        \begin{proposition}[Backup-policy feasibility]\label{prop:feasibility}
        Let $h_b(x):=\min\!\big\{\min_{s\in[0,T],\,i}h_i(\Phi(x,s)),\,\min_i\gamma_i(\Phi_T)\big\}$ be the backup CBF, and suppose the predicted flow from $x$ is safe, $h_b(x)\ge0$, with terminal residual $\rho_i:=\nabla\gamma_i(\Phi_T)^\top f_{\mathrm{cl}}(\Phi_T)+\alpha_T\gamma_i(\Phi_T)\ge0$ for all active $i$. Then $u=\pi_b(x)$ satisfies the backup CBF condition $\dot h_b+\alpha\,h_b\ge0$.
        \end{proposition}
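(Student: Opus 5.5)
The key fact is the semigroup property of the flow. When the applied input is the backup policy, $u=\pi_b(x)$, the system evolves under $\dot x=f_{\mathrm{cl}}(x)$. From $x(t)$ one then has $\Phi(x(t),s)=\Phi(x,t+s)$. Differentiating in $t$ at $t=0$ gives
\[
Q(s)\,f_{\mathrm{cl}}(x)=\tfrac{\partial}{\partial s}\Phi(x,s)=f_{\mathrm{cl}}(\Phi(x,s)).
\]
This identity also follows from the variational equation \eqref{eq:variational}: $Q(s)f_{\mathrm{cl}}(x)$ and $f_{\mathrm{cl}}(\Phi(x,s))$ solve the same linear ODE with the same initial value. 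It converts every sensitivity-weighted term in \eqref{eq:running}--\eqref{eq:terminal} into a time derivative along the backup trajectory.

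\emph{Running constraints.} Under $u=\pi_b$, the left-hand side of \eqref{eq:running} becomes
\[
\tfrac{d}{ds}h_i(\Phi(x,s))\big|_{s=s_k}+\alpha_R h_i(\Phi_k).
\]
The function $\eta_{i}(s):=h_i(\Phi(x,s))$ is nonnegative on $[0,T]$ because $h_b(x)\ge0$. I would interpret the backup CBF condition, as in \cite{chen_backup_2021,gurriet_online_2018}, through $\dot h_b$ as a Dini derivative of the min along the flow. Under $\pi_b$ the trajectory is time-shifted: $h_b(x(t))$ equals the minimum of $h_i(\Phi(x,s))$ over the window $s\in[t,t+T]$, together with the terminal terms.

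For the running part, the minimum over a window sliding forward can only decrease by losing the left endpoint. Points $s_k$ inside the window have their values shifted but not destroyed. So the upper-right Dini derivative of the running minimum is bounded below by the derivative of $\eta_i$ at the active minimizer. At an interior minimizer this derivative is $0$. At the left endpoint it is nonnegative, since a minimizer there has $\eta_i'(0)\ge0$. At the right endpoint $s=T$ the minimizer enters the new region, and there I would dominate it by the terminal constraint using $\gamma_i$, which bounds $h_i$ near hover. In all cases $\dot{(\cdot)}\ge0\ge-\alpha h_b$ because $h_b\ge0$.

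\emph{Terminal constraint.} Under $u=\pi_b$, the left-hand side of \eqref{eq:terminal} equals
\[
\nabla\gamma_i(\Phi_T)^\top f_{\mathrm{cl}}(\Phi_T)+\alpha_T\gamma_i(\Phi_T)=\rho_i\ge0
\]
by hypothesis. So the terminal branch satisfies $\dot\gamma_i(\Phi(x(t),T))+\alpha_T\gamma_i\ge0$ directly.

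Combining the branches, I would use the standard fact that the minimum of functions each satisfying $\dot\phi_j+\alpha\phi_j\ge0$, with $\alpha=\min(\alpha_R,\alpha_T)$ and nonnegative values, satisfies the same inequality in the Dini sense. This gives $\dot h_b+\alpha h_b\ge0$. As a byproduct, $\pi_b(x)$ satisfies every affine constraint in \eqref{eq:cbfqp}, and Prop.~\ref{prop:admissibility} handles the rotor polytope.

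The main obstacle is the running branch. Nonnegativity of $h_i(\Phi_k)$ does not by itself make each sampled constraint \eqref{eq:running} hold pointwise: $\eta_i'(s_k)$ may be very negative at a node with large value. I expect to need the $\alpha_R$ term together with a Lipschitz or sampling-density argument, or to fall back on the nonsmooth min/Dini formulation above rather than nodewise verification.
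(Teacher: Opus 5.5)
Your route is the paper's: the flow identity $Q_s f_{\mathrm{cl}}(x)=f_{\mathrm{cl}}(\Phi(x,s))$, a Danskin-type reduction of $\dot h_b$ to the rate of the active term, a zero rate at an interior running minimizer, and $\dot h_b+\alpha_T h_b=\rho_{i^\star}\ge0$ when a terminal term is active.

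\textbf{The gap at $s^\star=T$.} The paper handles the running branch by asserting that the rate at any minimizer of $s\mapsto h_{i^\star}(\Phi(x,s))$ is nonnegative. That is only true for $s^\star\in[0,T)$. You rightly notice that a minimizer at $s^\star=T$ has $\eta_i'(T)\le0$, but your patch for that case fails.

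\begin{itemize}
\item The domination goes the wrong way. Since $h_i=\beta\gamma_i-\delta^2\le\beta\gamma_i$ with $\beta=v^\top A_i v\to0$ at hover, near hover the running value at $s=T$ lies strictly below the terminal value. So $\gamma_i$ bounds $h_i$ from above, not below, and the terminal branch is not the active term at $s=T$. It cannot absorb this case.
\item This is the typical situation, not a corner case. Because $h_i$ vanishes at $v=0$ (the degeneracy noted in Sec.~\ref{subsec_backup_cc}) and $\pi_b$ brakes the vehicle, $s\mapsto h_i(\Phi(x,s))$ typically decays toward zero near the horizon. So $s^\star=T$ with $\eta_i'(T)<0$ is what you should expect.
\item The hypotheses do not control it. There $\dot h_b+\alpha_R h_b=\eta_i'(T)+\alpha_R\eta_i(T)$. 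Its sign depends on $\alpha_R$ relative to the decay rate of $h_i$ along the backup flow, roughly twice the velocity decay rate since $h_i$ is quadratic in $v$. Neither $h_b\ge0$ nor $\rho_i\ge0$ constrains this.
\end{itemize}

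Closing the argument requires an extra hypothesis, for example a running residual at the horizon, $\nabla h_i(\Phi_T)^\top f_{\mathrm{cl}}(\Phi_T)+\alpha_R h_i(\Phi_T)\ge0$ for the active $i$. The flow identity converts this directly into the required inequality. Neither your proposal nor the paper's proof supplies it.

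\textbf{Smaller corrections.}

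\begin{itemize}
\item Combining branches with $\alpha=\min(\alpha_R,\alpha_T)$ goes the wrong way. For $\phi\ge0$, $\dot\phi+\alpha_T\phi\ge0$ gives $\dot\phi+\alpha\phi\ge0$ only when $\alpha\ge\alpha_T$. Use the maximum, or keep branch-dependent gains as the paper does.
\item Your claimed byproduct, that $\pi_b(x)$ satisfies every sampled constraint in \eqref{eq:cbfqp}, is false. Your own last paragraph says why: $\eta_i'(s_k)$ can be very negative at a non-minimizing node. The proposition concerns only the continuous condition on $h_b$, and the paper explicitly allows the sampled QP to become infeasible. Nodewise verification is therefore not part of what you must show.
\item A sliding-window minimum can decrease only through values entering at the right end, never by losing the left endpoint.
\end{itemize}
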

        \begin{proof}
        The proof uses the flow identity $Q_s f_{\mathrm{cl}}(x)=f_{\mathrm{cl}}(\Phi(x,s))$, from differentiating $\Phi(x,s+\delta)=\Phi(\Phi(x,\delta),s)$ at $\delta=0$ \cite{chen_backup_2021}. By Danskin's theorem $\dot h_b$ equals the rate of the active term. If it is a running barrier $h_{i^\star}(\Phi_{s^\star})$, then under $\pi_b$ its rate is $\nabla h_{i^\star}(\Phi_{s^\star})^\top Q_{s^\star} f_{\mathrm{cl}}(x)=\tfrac{d}{ds}h_{i^\star}(\Phi(x,s))\big|_{s^\star}$ by the flow identity. Since $s^\star$ minimizes $s\mapsto h_{i^\star}(\Phi(x,s))$, this rate is nonnegative (zero at an interior minimizer), so $\dot h_b+\alpha_R h_b\ge0$. If the active term is a terminal barrier $\gamma_{i^\star}(\Phi_T)$, the same identity gives $\dot h_b+\alpha_T h_b=\rho_{i^\star}\ge0$.
        \end{proof}
        Since $\gamma_i$ depends only on position, the residual reduces to $\rho_i=\alpha_T\gamma_i(\Phi_T)+\nabla_p\gamma_i(\Phi_T)^\top v_{\Phi_T}=\alpha_T\gamma_i(\Phi_T)+\mathcal{O}(\|v_{\Phi_T}\|)$, so $\rho_i\ge0$ holds, meaning the predicted hover position is safe and the residual terminal velocity is small. That is exactly what the exponential decay of $\|v_{\Phi_T}\|$ established above guarantees for a sufficiently long horizon.
        
        The pointwise constraints \eqref{eq:running}, \eqref{eq:terminal} enforce $\dot h_b+\alpha h_b\ge0$ through a sufficient inner approximation on the grid $\{s_k\}$, solvable by \eqref{eq:cbfqp}. Because the relaxation is only sufficient, \eqref{eq:cbfqp} can become infeasible (between grid points, or as the state leaves $\mathcal{S}$). In that case the filter applies $u=\pi_b(x)$, which is admissible (Prop.~\ref{prop:admissibility}) and, by Proposition~\ref{prop:feasibility}, satisfies the backup CBF condition.

\section{EXPERIMENTS \& RESULTS} \label{sec_simulation}
In this letter we present three experiments. For all three experiments, we use a custom, real-world 3DGS scene with approximately 395k Gaussian splats. We trained the scene using Nerfstudio and specifically Splatfacto \cite{nerfstudio}. For training, we used a scale regularizer and scale adjustments as described in \cite{tscholl_perception_2025} to improve the numerical stability. For each trajectory, we specified start and goal points and connected them with a straight-line reference trajectory parameterized as a septic polynomial. Each safety-filter QP also includes a control Lyapunov function (CLF) on the position and velocity errors, with a slack variable $\xi \ge 0$ that relaxes the QP whenever the CBF and CLF constraints compete. The CLF derivation is omitted for brevity. We assume the attitude remains in a compact subset of the MRP domain that excludes the singularity at 360 degrees such that $R(\sigma)$ and $Z(\sigma)$ are smooth and bounded. 
%
    \subsection{Reduced-Order vs Full-Order Model Comparison} \label{subsec_linear_nonlinear}
    This experiment shows why closing the realization gap between the full nonlinear dynamics and a reduced-order double integrator (DI) model matters.
    We sweep the maximum reference velocity and, for each trajectory, report the peak instantaneous realization error. The instantaneous realization error
    $\lVert e_a\rVert_\infty := \sup_{t\in[0,T_f]} \bigl\lVert a_{\mathrm{real}}(t)-a_{\mathrm{cmd}}(t)\bigr\rVert_2 $,
    where we take the inner $\ell_2$ norm over the three axes and the outer $\ell_\infty$ over time.
    \begin{figure}[t]
        \includegraphics[width=\columnwidth] {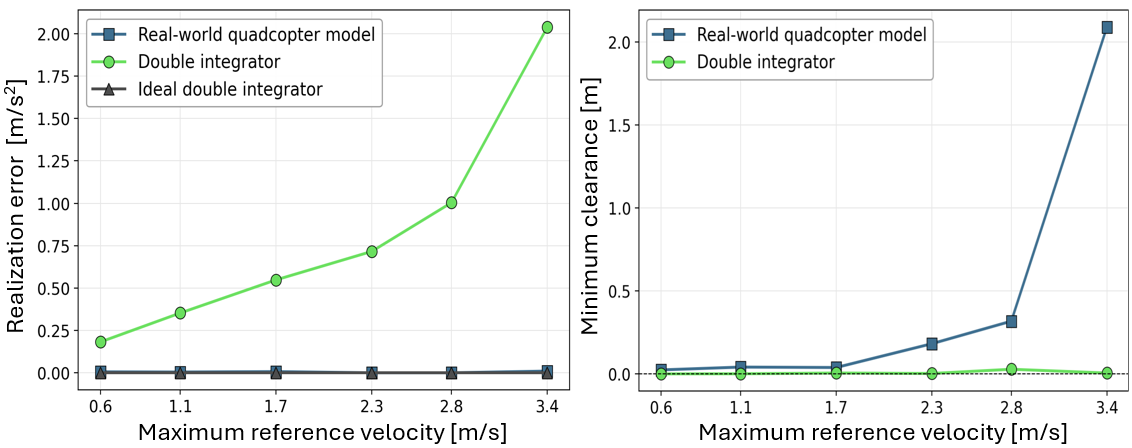} 
        \caption{Realization error and minimum clearance versus maximum reference velocity. The left image shows the realization error of the reduced-order double integrator (green) grows rapidly with speed, while the nonlinear ECBF (blue) and the ideal double integrator (gray) remain near zero. The right image shows that the double integrator retains almost no clearance margin, whereas the nonlinear filter grows more conservative with speed. The double integrator's increasing realization error and minuscule margin highlight the lack of robustness at high speeds.}
        \label{fig_realization_error}
    \end{figure}
    
    Figure \ref{fig_realization_error} compares three ways of enforcing the collision cone CBF. The reduced-order DI filter (green) commands an acceleration fed to the nonlinear quadrotor as $f_{\mathrm{cmd}}=m(a_{\mathrm{cmd}}+ge_3)$, so its realization error captures the mismatch between the commanded and achieved acceleration on the real quadcopter. The nonlinear ECBF (blue) enforces safety through the true dynamics, while the ideal DI (gray) runs the linear CBF on a pure double integrator and is therefore zero by construction.
    
    The left image shows the realization error against reference velocity. Below \SI{1}{\meter\per\second} the DI error is small, and since the DI is cheaper to compute, the DI may even be preferable for low-speed applications. As the velocity increases, however, the DI error grows rapidly while the ECBF stays flat, which is exactly the regime where accounting for the full dynamics pays off. The right image shows the minimum clearance $\gamma_{\min}=\min(r^\top Ar-c^2)$ over each trajectory. Both filters remain safe ($\gamma_{\min}\ge 0$), but the DI collision cone keeps almost no constraint margin and rides the obstacle boundary, whereas the nonlinear filter grows increasingly conservative with speed and maintains a clear safety buffer. Together, the figures shows that the DI's growing realization error combined with a tiny clearance margin leaves no room for robustness, particularly at high speeds.
    
    \subsection{Comparison to the State-of-the-Art} \label{sec_sota_ours}
    \begin{figure*}[t!] 
        \includegraphics[width=\textwidth] {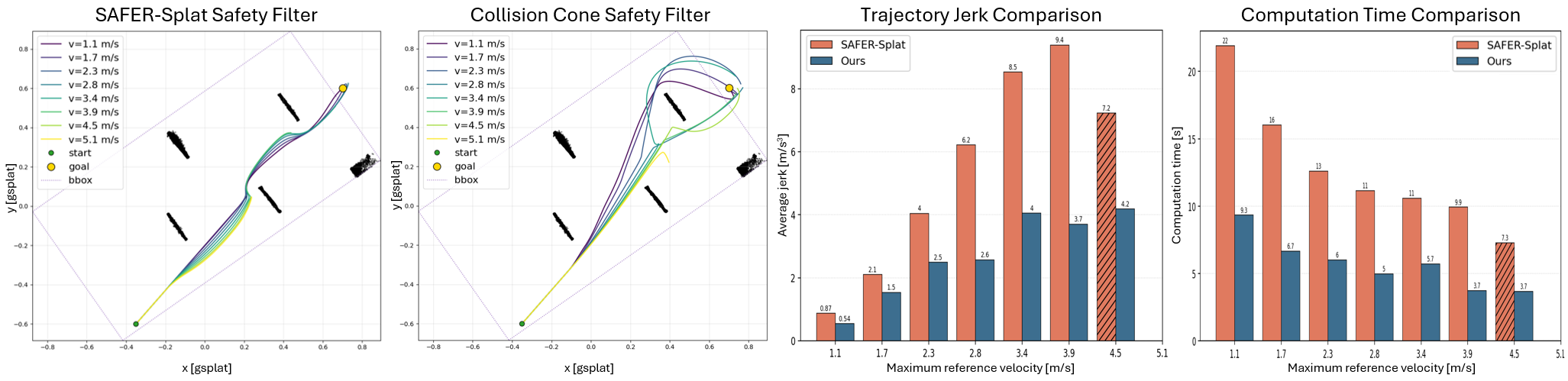} 
        \caption{Comparison between SAFER-Splat and our proposed method. The left two images show the xy-trajectory of a drone under increasing velocity using the SAFER-Splat and the collision cone ECBF filter, respectively. The right two images show the corresponding trajectory jerk and computation time of those trajectories. The simulation results show that our method generates smoother trajectories at a fraction of the computational cost.}
        \label{fig_safer_vs_ours}
    \end{figure*}
    This experiment shows that a collision cone safety filter produces smoother trajectories with lower computation time than a distance-based filter such as SAFER-Splat. For a fair comparison, both filters share the same start/goal point, reference trajectory, translational gains ($K_p=3$, $K_v=2$), control-loop frequency, and splat-filter radius. The splat-filter radius corresponds to a distance gate that keeps only the splats within a ball of the vehicle. Figure \ref{fig_safer_vs_ours} sweeps the maximum reference speed from \SI{1.1}{\meter\per\second} to \SI{5.1}{\meter\per\second}. The first two images show the resulting $xy$-trajectories under SAFER-Splat and the collision cone ECBF, and the last two compare the trajectory jerk and computation time of those trajectories. Across the sweep, the ECBF cuts the average jerk by $47\%$ and maximum jerk by $46\%$ and computes trajectories $2.25\times$ faster, because it encodes safety as a closed-form constraint rather than solving a distance program at each step. Furthermore, SAFER-Splat's maximum achieved velocity saturates near \SI{3.5}{\meter\per\second}. As expected, a reactive filter cannot approach obstacles arbitrarily fast without risking penetration. The collision cone filter instead reaches the \SI{4.5}{\meter\per\second} reference velocity while handling the model-based realization gap. Since the conditions are held fixed across runs, both filters eventually fail to reach the goal within the allotted time, i.e., beyond a reference velocity of approximately \SI{5}{\meter \per \second} (earlier path-incompletion is marked by the shaded column in the jerk panel), but neither compromise safety.

    \subsection{backup CBF Validation} \label{sec_nonlinear_cc_cbf}
    Reaching higher speeds safely requires the backup CBF to formally address the actuator QP feasibility that the ECBF cannot guarantee {\it a priori}. We validate the collision cone backup CBF in software-in-the-loop (SITL) and on hardware, but omit the SITL results due to space constraints.
    
    Nerfstudio/Splatfacto reconstructs the scene in a normalized training frame, which we refer to as the gsplat frame, that is geometrically consistent with the input views but not metrically scaled. We recover metric units by estimating a similarity transform between the gsplat and physical-world frames.
    
    The hardware platform is a custom \SI{0.461}{\kilo\gram} quadrotor with a \SI{0.078}{\meter} half-span and roughly \SI{17}{\newton} of thrust. A Raspberry Pi Zero~2W bridges the flight controller and the ground control station (GCS). For the GCS we use an Intel~i7 laptop that runs the full stack on CPU and streams the safe control inputs over Wi-Fi. For the flight controller we use Ardupilot, which only exposes a body-rate interface rather than direct thrust and torque for the control input. Therefore, we convert the geometric controller's torque into a rate command $\omega_{\mathrm{des}}$ by RK4-integrating $\dot{\omega} = J^{-1}(\tau - \omega \times J\omega)$ over one rate-loop period $\Delta t$ from the measured $\omega$ with $\tau$ held constant. For safety, we then clip each component to $\pm\omega_{\max}$ before sending it to the flight controller.
    
    To localize the quadcopter we use a Vicon motion capture system, which provides position and velocity but not acceleration or jerk. Therefore, we drop the $-K_v\dot{e}_v$ term in $\dot{f}_c$ and the $-K_p\dot{e}_v - K_v\ddot{e}_v$ terms in $\ddot{f}_c$, and to simplify, hold the reference yaw constant ($\dot{\psi}_{\mathrm{ref}}=\ddot{\psi}_{\mathrm{ref}}=0$) to fix the heading toward the goal. Finally, we prune the obstacle set before assembling \eqref{eq:cbfqp}. A distance gate keeps splats within a ball of the vehicle. An approach gate $r_i^\top A_i v \ge 0$, based on the sufficient condition of the collision cone \eqref{eq:cone_approach}, discards obstacles behind the motion, and a $k$-nearest-neighbors cap bounds the rest to the available compute, with running rows dropped when $h_i$ is far from the boundary or the speed is low.
    
    \begin{figure}[!t]
        \centering
        \includegraphics[width=\columnwidth]{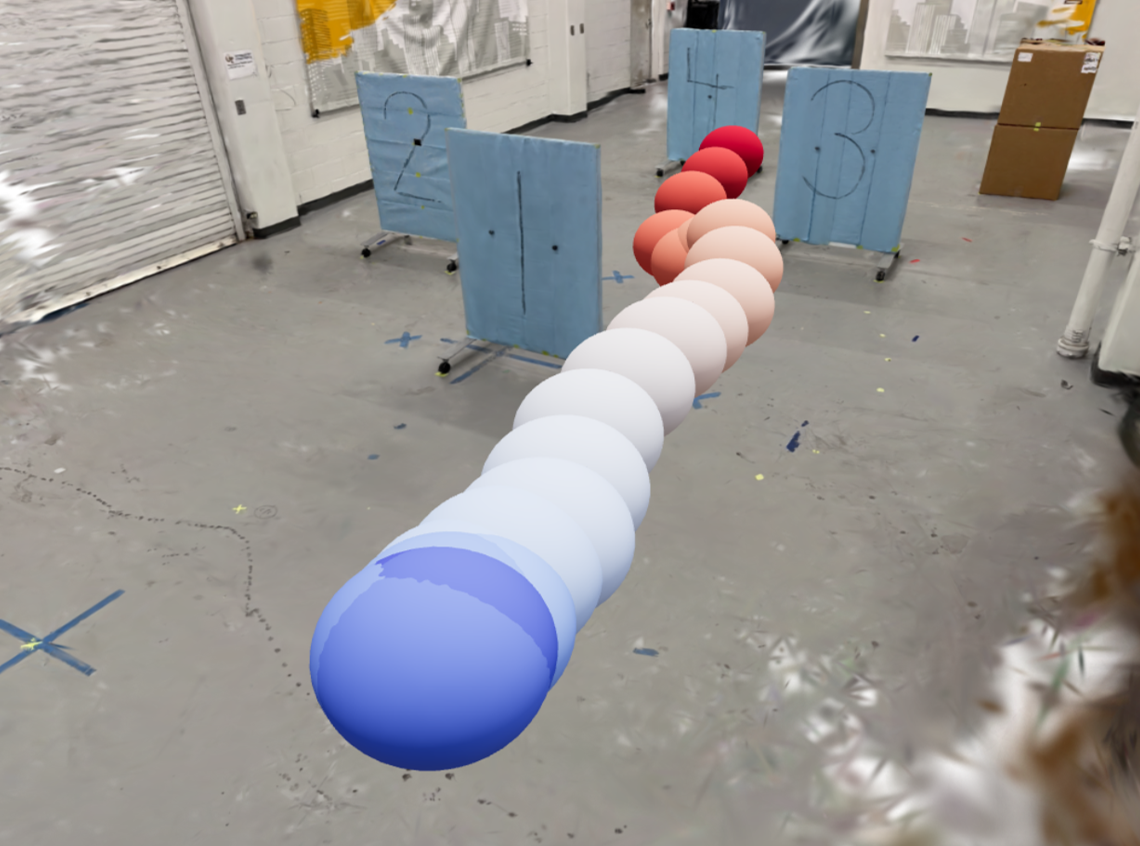}
        
        \vspace{1ex}
        
        \renewcommand{\arraystretch}{1.0}
        \begin{tabular*}{\columnwidth}{@{\hspace{0.5em}}l@{\extracolsep{\fill}}r@{\hspace{0.5em}}}            \toprule
            \textbf{Metric}     & \textbf{Value} \\
            \midrule
            Loop max            & 34.836~ms \\
            Loop p99            & 21.043~ms \\
            Loop mean           & 12.318~ms \\
            Loop overruns       & 1/822 (0.12\%) \\
            \midrule
            Feasible ticks      & 755/822 (91.8\%) \\
            Main QP infeasibilities  & 67 \\
            \midrule
            Min clearance       & 0.154~m \\
            Mean speed          & 0.378~m/s \\
            Max speed           & 1.481~m/s \\
            \midrule
            Termination         & Manual shutdown \\
            \bottomrule
        \end{tabular*}
        \caption{Hardware validation of the backup CBF safety filter. The image shows the quadrotor successfully avoiding the obstacles in the scene at a maximum speed of approximately \SI{1.5}{\meter \per \second}. The table captures the hardware performance of the run via loop-timing, feasibility and motion metrics.}
        \label{fig:hardware_results}
    \end{figure}

    Figure \ref{fig:hardware_results} shows the run visualized in our real-world 3DGS scene, where the robot's extent confirms it stays clear of the obstacles, and the table reports its performance. The filter runs comfortably at \SI{30}{\hertz} (\SI{21}{\milli\second} p99 per full loop of the stack), with the active set pruner/distance gate at \SI{10}{\hertz} and the geometric controller at \SI{50}{\hertz}. The QP stays feasible $91.8\%$ of the time with the backup policy engaging only near obstacles. We ended the run manually, as the backup CBF was not yet perfectly tuned, but sufficient to demonstrate the initial concept.
    We chose a brake-and-tilt backup policy to show that such a filter can run in real-time on 3DGS. For sustained fast flight, a cone-escape policy would be preferable. A cone-escape policy, rather than braking to rest, pushes the velocity vector out of the collision cone and thus preserves the quadcopter's full speed. However, such a backup policy is more involved and therefore left to future work.
%
\section{CONCLUSIONS} \label{sec_conclusion}
This letter closed the model-based realization gap in 3D Gaussian Splatting safety filters by enforcing an analytic collision cone barrier through the full nonlinear quadrotor dynamics. We derived a collision cone exponential CBF that explicitly handles the high relative degree, and a backup CBF that guarantees QP feasibility under actuator limits through a forward-simulated backup policy. Against a state-of-the-art 3DGS filter, the ECBF reduces trajectory jerk by 47\% and runs 2.25$\times$ faster in simulation, while the backup filter flies collision-free in real time on hardware. Future work will pursue a cone-escape backup policy that allows the quadrotor to preserve its speed for sustained fast flight.



\bibliographystyle{IEEEtran}
\bibliography{root}


\section{APPENDIX} \label{app:backup-policy}
This appendix collects the derivations deferred from Section~\ref{subsec_backup_cc}. Admissibility of the backup policy $\pi_b$ of~\eqref{eq:pi-unsat}--\eqref{eq:pi-sat} is established in the main text (Prop.~\ref{prop:admissibility}) but here we derive the horizontal-braking torque underlying~\eqref{eq:pi-unsat}, and we prove the two remaining backup-policy properties \cite{chen_backup_2021}: Regularity (Lemma~\ref{lem:regularity}), which guarantees that the flow $\Phi$ and the sensitivity $Q$ in~\eqref{eq:variational} are well defined, and convergence to hover (Sec.~\ref{app:stability}), which is what makes the terminal residual $\rho_i$ in Proposition~\ref{prop:feasibility} small.

\subsection{Horizontal-braking torque}
At $\sigma=0$ the linear-velocity dynamics in~\eqref{eq_mrp_dynamics} reduce to $\dot v=-g e_3+(T/m)R^{\top}e_3$, so the only available horizontal acceleration is the tilt of the thrust axis $R^\top e_3$ away from $e_3$. To brake the horizontal motion the policy must, at any yaw, tilt the thrust axis against the inertial horizontal velocity $v_h:=(I-e_3e_3^{\top})v$. Expressed in the body frame this is $v^b:=Rv$, with horizontal components $(v^b_x,v^b_y)=(e_1^{\top}Rv,\,e_2^{\top}Rv)$. A pure roll about $\hat e_1^b$ tilts the thrust toward $-\hat e_2^b$ and a pure pitch about $\hat e_2^b$ tilts the thrust toward $+\hat e_1^b$ (assuming a right-handed body frame with $e_3^b$ pointing upward). Hence, to brake $v^b_y$ a roll torque of sign $\mathrm{sgn}(v^b_y)$ is needed, and to brake $v^b_x$ a pitch torque of sign $-\mathrm{sgn}(v^b_x)$ is needed. Taking magnitudes linear in body velocity gives
\begin{equation} \label{eq:brk-derivation}
    \widetilde{\tau}_{\mathrm{brk}}
    = k_{v_h}\begin{bmatrix} v^b_y\\ -v^b_x\\ 0\end{bmatrix}
    = k_{v_h}\,S\,v^b = k_{v_h}\,S\,R(\sigma)\,v,
\end{equation}
which is the braking term in~\eqref{eq:pi-unsat}. No yaw braking is applied. For simplicity this term uses the heuristic projection $SR(\sigma)v$ rather than a tracked attitude projection from the geometric controller.

\subsection{Regularity} \label{app:regularity}
\begin{lemma}\label{lem:regularity}
    The backup policy satisfies $\pi_b\in C^\infty(\mathbb{R}^{12};\mathbb{R}^4)$ and is globally Lipschitz with constant
    \begin{equation} \label{eq:Lipschitz}
        L_{\pi_b}\le\kappa\,(m k_{v_z}+K_\sigma+K_\omega+k_{v_h}),
    \end{equation}
    in any norm under which $\|R(\sigma)\|\le1$ and $SR(\sigma)v$ is differentiated through both factors.
\end{lemma}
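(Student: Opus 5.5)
The plan is to bound the Jacobian $D\pi_b(x)$ uniformly in $x$ and then apply the mean value inequality $\|\pi_b(x)-\pi_b(y)\|\le\sup_z\|D\pi_b(z)\|\,\|x-y\|$. That inequality is valid on all of $\mathbb{R}^{12}$ because the domain is convex. Smoothness comes first, by composition. The raw commands \eqref{eq:pi-unsat} are polynomial in $(v_z,\sigma,\omega,v)$ except for the factor $R(\sigma)$. On the MRP chart $R(\sigma)$ is a rational function of $\sigma$ with nonvanishing denominator $(1+\|\sigma\|^2)^2$, so it is $C^\infty$. The saturation maps \eqref{eq:pi-sat} are $\tanh$ composed with affine maps. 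Hence $\pi_b\in C^\infty(\mathbb{R}^{12};\mathbb{R}^4)$, and in particular the mean value argument applies.

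Second, I will factor the Jacobian channelwise as $D\pi_b = \mathrm{diag}\big(\kappa\,\mathrm{sech}^2(\cdot)\big)\,D\tilde u$, where $\tilde u=(\widetilde T,\widetilde\tau_m)$. Each outer factor satisfies $\tfrac{d}{dz}\big[c\tanh(\kappa z/c)\big]=\kappa\,\mathrm{sech}^2(\kappa z/c)\le\kappa$, and this holds independently of the caps $c_T,\tau_{i,c}$. So it suffices to show $\|D\tilde u\|\le m k_{v_z}+K_\sigma+K_\omega+k_{v_h}$. I split $D\tilde u$ into four terms and use the triangle inequality.
\begin{itemize}
\item The thrust term $-mk_{v_z}v_z$ contributes $mk_{v_z}\|e_3^\top\|\le mk_{v_z}$.
\item The attitude term $-K_\sigma\sigma$ contributes $K_\sigma$.
\item The rate term $-K_\omega\omega$ contributes $K_\omega$.
\item The braking term $k_{v_h}SR(\sigma)v$ has derivative $k_{v_h}\big[S\,\partial_\sigma(R(\sigma)v)\;\;SR(\sigma)\big]$.
\end{itemize}
Since $\|S\|=\|\hat e_3\|=1$ and $\|R\|\le1$ by hypothesis, the $v$-block is bounded by $k_{v_h}$. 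Summing the four bounds and multiplying by $\kappa$ gives \eqref{eq:Lipschitz}.

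The main obstacle is the $\sigma$-block $S\,\partial_\sigma(R(\sigma)v)$ of the braking term. This block is linear in $v$, so in the Euclidean norm it is not uniformly bounded. I will handle it through the clause of the statement that the norm is one ``under which $\|R(\sigma)\|\le1$ and $SR(\sigma)v$ is differentiated through both factors.'' Concretely, I would work in the product norm where the velocity-attitude coupling is measured by the bilinear operator norm of $(\sigma,v)\mapsto R(\sigma)v$. This is the norm in which the total derivative of $SR(\sigma)v$, taken through both factors, has operator norm at most $\|S\|\,\sup\|R\|\le1$. With that choice the whole braking contribution is charged to the single constant $k_{v_h}$.

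As a consistency check I will use the saturation. On any direction where $\|v\|$ is large enough that $\partial_\sigma(Rv)$ is large, the braking torque $\widetilde\tau_{m,i}$ is itself large. The factor $\kappa\,\mathrm{sech}^2(\kappa\widetilde\tau_{m,i}/\tau_{i,c})$ then decays exponentially, faster than the linear growth in $\|v\|$. This shows that the product $\mathrm{sech}^2\cdot\partial_\sigma(Rv)$ stays bounded, which is what makes the constant global. If the norm convention alone proves too weak, this is the step I would fall back on. It requires care because the PD terms could partially cancel the braking torque; I would treat the cancellation set by bounding the braking term there directly through the standing compact-MRP assumption on $\sigma$.
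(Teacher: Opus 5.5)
Your skeleton matches the paper's sketch: smoothness by composition, the channelwise factor $\kappa\,\mathrm{sech}^2(\cdot)\le\kappa$, and a triangle inequality over the four gain terms. Your smoothness step is actually sharper than the paper's. The MRP matrix $R(\sigma)$ is rational with denominator $(1+\|\sigma\|^2)^2$, so it is smooth on all of $\mathbb{R}^3$.

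You also correctly isolate the block $k_{v_h}S\,\partial_\sigma\bigl(R(\sigma)v\bigr)$. The paper's sketch never bounds this block: it argues the raw commands are affine ``once $R(\sigma)$ is fixed,'' i.e.\ it freezes $R$. The gap is that neither of your devices for this block works, and none can, because the global Lipschitz claim is false. Changing the norm cannot help, since all norms on $\mathbb{R}^{12}$ are equivalent, so a $C^1$ map with unbounded Jacobian is non-Lipschitz in every norm. Moreover, $(\sigma,v)\mapsto R(\sigma)v$ is not bilinear. Even a genuinely bilinear map has a derivative growing linearly in its arguments. Your bound ``operator norm at most $\|S\|\sup\|R\|\le1$'' simply discards the $\partial_\sigma R$ term.

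Your saturation fallback assumes that a large $\partial_\sigma(R(\sigma)v)$ forces a large braking torque. That fails whenever the body-frame velocity is vertical. Take $\sigma=0$, $\omega=0$, $v=Ve_3$. Then $SR(0)v=-\hat e_3(Ve_3)=0$, so $\widetilde\tau_m=0$ and every $\mathrm{sech}^2$ factor equals $1$. Meanwhile $R(\sigma)=I-4\hat\sigma+O(\|\sigma\|^2)$ gives $\partial_\sigma(R(\sigma)v)|_{\sigma=0}=4V\hat e_3$ and $S\,(4V\hat e_3)=4V\,\mathrm{diag}(1,1,0)$. Hence
\[
\frac{\partial\tau^{\mathrm{sat}}_{m,1}}{\partial\sigma_1}=\kappa\,\bigl(4k_{v_h}V-K_\sigma\bigr)\longrightarrow\infty\quad (V\to\infty).
\]
No PD cancellation is involved, and $\sigma=0$ lies in every compact MRP set, so the compactness assumption cannot rescue this. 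The unboundedness is in $v$, not $\sigma$.

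Your argument does prove $\pi_b\in C^\infty$, using $\|\partial_\sigma(R(\sigma)v)\|\le\|Z(\sigma)^{-1}\|\,\|v\|=4\|v\|/(1+\|\sigma\|^2)$. It also gives the Lipschitz bound $\kappa\bigl(mk_{v_z}+K_\sigma+K_\omega+k_{v_h}(1+4\bar v)\bigr)$ on the convex set $\{\|v\|\le\bar v\}$. That velocity-dependent constant is all that existence and uniqueness of $\Phi$ and $Q$ actually require. The saturated thrust bounds $\|\dot v\|$, so $v$ stays bounded over the finite horizon $[0,T]$. The lemma should be restated in that form rather than proved as written.
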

\begin{proof}[Proof sketch]
    $R(\sigma)$ and $Z(\sigma)$ are real-analytic on $\mathbb{R}^3$ away from the MRP singularity, and the shadow-set projection is applied only between integration steps, so within one evaluation of $\pi_b$ all maps are smooth. The raw commands in~\eqref{eq:pi-unsat} are affine in $(v,\sigma,\omega)$ once $R(\sigma)$ is fixed, and $\tanh$ is real-analytic with $|\tanh'|\le1$. Hence $\pi_b\in C^\infty$, which exceeds the $C^1$ required for the flow and its sensitivity. The Lipschitz bound then follows because saturation contracts slopes $(|\tanh'|\le1)$, the saturation argument carries the factor $\kappa$ from~\eqref{eq:pi-sat}, and the raw commands are affine in $(v, \sigma, \omega)$ with state-gradients bounded by the gains $mk_{vz}, K_\sigma, K_\omega, k_{vh}$.
\end{proof}
Lemma~\ref{lem:regularity} guarantees the existence and uniqueness of the flow $\Phi(x,s)$ and the sensitivity $Q(s)=\partial\Phi/\partial x$ on $[0,T]\times\mathbb{R}^{12}$.

\subsection{Convergence to hover} \label{app:stability}
Finally, we show that the closed loop settles to hover. At any equilibrium of $f+g\pi_b$, $\dot x=0$ forces $v=0$ (from $\dot p=v$), then $\omega=0$ (from $\dot\sigma=Z(\sigma)\omega=0$, with $Z$ invertible), then $\sigma=0$ (with $v=\omega=0$ the braking term $k_{v_h}SR(\sigma)v=0$, so $\dot\omega=0$ reduces to $-K_\sigma\sigma=0$). The velocity row $-ge_3+(T_{\mathrm{sat}}/m)e_3=0$ then closes with $T_{\mathrm{sat}}=mg$ at $\widetilde T=mg$. The equilibrium set is therefore $\mathcal{E}=\{(p,0,0,0):p\in\mathbb{R}^3\}$: the policy regulates attitude and velocity but is intentionally agnostic to position. Local exponential stability of any $x_h\in\mathcal{E}$ follows from linearizing about $x_h$. In the unsaturated region of $\tanh$, $\partial\pi_b/\partial x$ at $x_h$ has the structure $-\kappa\,\mathrm{diag}(mk_{v_z}/c_T,\,\tau^{-1}_{i,c}K_\sigma,\,\tau^{-1}_{i,c}K_\omega,\dots)$ and therefore contributes $-\kappa$ times the PD gains on the $(v_z,\sigma,\omega)$ channels, yielding a $J_{\mathrm{cl}}(x_h)$ that is Hurwitz on the $(\sigma,v,\omega)$ block with position as the single uncontrollable mode, consistent with $\mathcal{E}$ being a continuum of equilibria. Hence $\|v_{\Phi_T}\|$ and $\|\omega_{\Phi_T}\|$ decay exponentially along $\Phi$, which is what makes the terminal residual $\rho_i$ in Proposition~\ref{prop:feasibility} small and quantifiable.

\end{document}